\newtheorem{theorem}{Theorem}
\newtheorem{defn}{Definition}
\newtheorem{corollary}{Corollary}
\begin{document}

\title{Asymptotically Optimal Planning by Feasible Kinodynamic Planning in State-Cost Space}

\author{Kris~Hauser
	and Yilun~Zhou
\thanks{K. Hauser and Yilun Zhou are with the Departments
of Electrical and Computer Engineering and of Mechanical Engineering and Materials Science, Duke University, Durham,
NC, 27708 USA e-mail: \{kris.hauser,yilun.zhou\}@duke.edu.}
}

\maketitle

\begin{abstract}
This paper presents an equivalence between feasible kinodynamic planning and optimal kinodynamic planning, in that any optimal planning problem can be transformed into a series of feasible planning problems in a state-cost space whose solutions approach the optimum.  This transformation gives rise to a meta-algorithm that produces an asymptotically optimal planner, given any feasible kinodynamic planner as a subroutine.  The meta-algorithm is proven to be asymptotically optimal, and a formula is derived relating expected running time and solution suboptimality. It is directly applicable to a wide range of optimal planning problems because it does not resort to the use of steering functions or numerical boundary-value problem solvers. On a set of benchmark problems, it is demonstrated to perform, using the EST and RRT algorithms as subroutines, at a superior or comparable level to related planners. 
\end{abstract}

\section{Introduction}
Optimal motion planning is a highly active research topic in robotics, due to the pervasive need to compute paths that simultaneously avoid complex obstacles, satisfy dynamic constraints, and are high quality according to some cost function.  Recent advances in sampling-based optimal motion planning build on decades of work in the topic of {\em feasible} motion planning, in which costs are ignored.  However, the field is still some ways away from general-purpose optimal planning algorithms that accept arbitrary black-box constraints and costs as input.  In particular, {\em optimality under kinematic and differential constraints} remains a major challenge for sampling-based planners.

This paper presents a new {\em state-cost space} formulation that transforms optimal motion planning problems into feasible kinodynamic (both kinematically- and differentially-constrained) motion planning problems.  Using this formulation, we introduce a meta-algorithm, AO-$x$, to adapt any feasible kinodynamic planner $x$ into an asymptotically-optimal motion planner, provided that $x$ satisfies some relatively unrestrictive conditions, e.g., expected running time is finite.  The meta-algorithm accepts arbitrary cost functions, including non-differentiable ones, and handles whatever kinematic and differential constraints are handled by the underlying feasible planner. 

\begin{figure}
\centering
\includegraphics[width=0.45\linewidth]{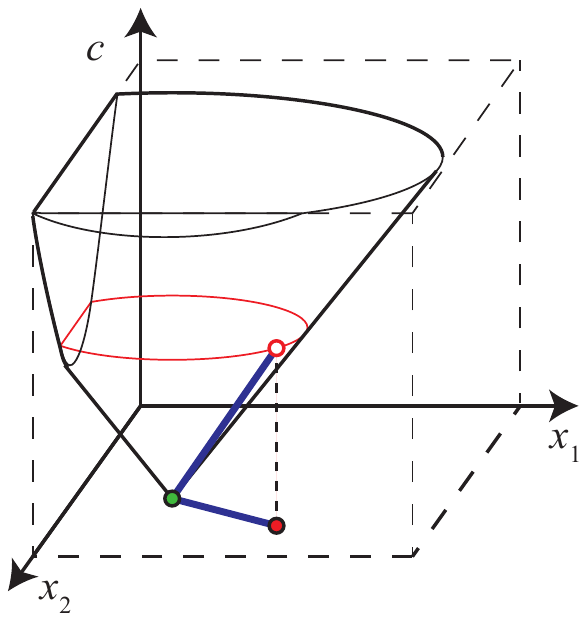} 
\includegraphics[width=0.45\linewidth]{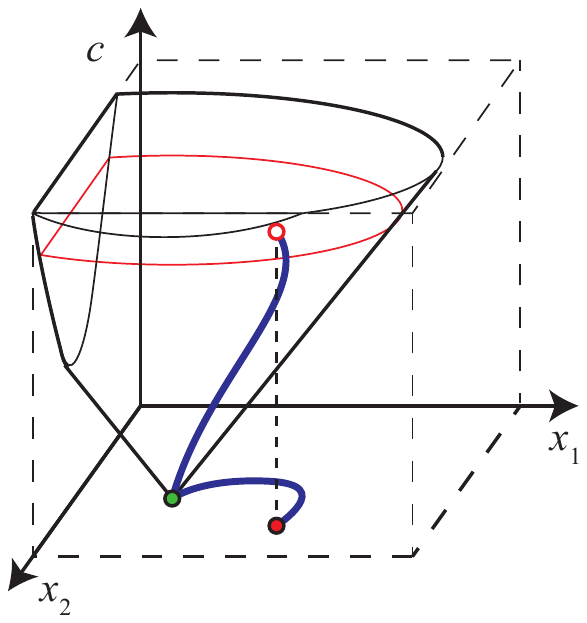} 
\caption{The state-cost space for a 2D, kinematically constrained problem with a path length cost function.  State-cost space is 3D, with a conical reachable set with an apex at the start configuration (green circle).  Two paths to the same state-space target (red filled circles) follow trajectories that arrive at different points in state-cost space (red open circles).}
\label{fig:StateCostSpace}
\end{figure}

The formulation is rather straightforward: $n$-dimensional state is augmented with an auxiliary cost variable, which measures the cost-to-come (i.e., accumulated cost from the start state), yielding a $(n+1)$-dimensional, dynamically-constrained feasible problem in (state, cost) space (Fig.~\ref{fig:StateCostSpace}).  The meta-algorithm proceeds by generating a series of feasible trajectories in state-cost space with progressively lower costs.  This is accomplished by first generating a feasible trajectory in state space, then progressively shrinking an upper bound on cost according to the cost of the best path found so far.  This meta-algorithm is proven to converge toward an optimal path under relatively unrestrictive conditions. 

The AO-$x$ meta-algorithm is demonstrated on practical examples using the RRT~\cite{rrt-LaValle-Kuffner} and EST~\cite{HKLR02} algorithms as subroutines for feasible kinodynamic planning.  Due to prior theoretical work on the running time of EST, we are able to prove that the expected running time of the meta-algorithm is $O(\epsilon^{-2} \ln \ln \epsilon^{-1} )$, where $\epsilon$ is the solution suboptimality.
Critically, this is one of the few asymptotically-optimal planners that exclusively uses control-sampling to handle dynamic constraints, rather than resorting to a steering function or a numerical two-point boundary value problem solver.  The new method outperforms prior planners in several toy scenarios including both dynamic constraints and complex cost functions.

\section{Background and Related Work}

Optimal motion planning has been a topic of renewed activity in robotics largely due to the advent of sampling-based motion planners that are proven to be asymptotically optimal~\cite{karaman2011sampling}.  But the community has had a long history of interest in optimal motions.
Numerical trajectory optimization techniques~\cite{betts1998survey,SimulatedAnnealing,BMS+01,CHOMP} have been long studied, but have several drawbacks.  First, they are prone to falling into local minima, and second they typically require differentiable constraint and cost representations, which are often hard to produce for complex obstacles.  Grid-based planners~\cite{ferguson2006using,Likhachev03b,FMM-Sethian} are often fast in low dimensional spaces but suffer from the ``curse of dimensionality,'' with performance degrading rapidly in spaces of higher dimension~\cite{LH2014}.  Sampling-based planners were originally developed to overcome many of these challenges, and have been shown to have excellent empirical performance in finding feasible paths in high-dimensional spaces, both without~\cite{KavrakiPRM} and with dynamic constraints~\cite{HKLR02,rrt-LaValle-Kuffner}.  However, they tend to produce jerky paths that are far from optimal.  Some hybrid approaches have combined sampling-based planning with local optimization to produce better paths~\cite{anytime2013,Vou2005}. 

More recently, sampling-based optimal planners like RRT* produce asymptotically-optimal paths whose costs converge in expectation toward the optimal path.  The insight is that optimal paths can be obtained by judiciously ``rewiring'' a tree of states to add connections that reduce the cost to a given node in the tree.  However, this requires a {\em steering function}, a method to produce a curve between two states that is optimal when obstacles are ignored.  In systems without dynamic constraints, this is as simple as generating a straight line.  But steering functions for dynamically-constrained systems are much harder to come by.

Several authors have extended RRT* to dynamically-constrained systems.  It is relatively easy to apply RRT* to dynamically-constrained systems if a steering function is available~\cite{KF2010}.  Proving convergence is harder, requiring analysis of small-time controllability conditions~\cite{KF2013}.  Other authors have extended RRT* to systems whose dynamics and costs are (or can be approximated) by linear and quadratic functions, respectively, by definition of a suitable steering function based on the LQR principle~\cite{perez2012lqr,WB2013}.  When more complex differential constraints are involved, it may not be possible to devise a suitable steering functions.  One method generated complex maneuvers using RRT* and performed each rewiring step by numerically solving a two-point boundary value problems~\cite{JKF2011}.  This adds greatly to computational expense.  A similar method performed rewiring using a spline-based trajectory representation that is optimized via a nonlinear program solver~\cite{SL2014}. 

The prior work with closest relation to ours in terms of generality of applicability is the Sparse-Stable-RRT planner~\cite{LLB2013}. Like our work, it avoids the use of a steering function entirely and samples directly in control space. Approximate rewiring is performed by allowing connections to points that are ``near enough'' according to a state-space distance metric.  This scheme was proven to satisfy asymptotic near-optimality, which is the property of converging toward a path with bounded suboptimality~\cite{6696508}.  In a more recent paper, the same authors have extended it to an asymptotically-optimal planner, SST*, by progressively shrinking nearness threshold parameters~\cite{LLB2014}.  However, Sparse-Stable-RRT and SST* and have many parameters to tune, and our experiments suggest that AO-$x$ planners in general outperform both planners.

We note the similarity of algorithm to Anytime-RRT~\cite{Ferguson2006anytime}, except that the planner uses state-cost space rather than simply state space (and has fewer parameters to tune).  Hence, Anytime-RRT does not obey any theoretical asymptotic- or near-optimality guarantees.  This is critical in practice, as our experiments suggest Anytime-RRT tends not to converge to an optimum.


\section{Theoretical Formulation}

This section presents the state-cost space formulation, the meta-algorithms, and theoretical results regarding asymptotically-optimality.

\subsection{Terminology}
First we define key concepts of feasible, optimal, and boundedly-suboptimal planning problems, as well as complete, probabilistically complete, and asymptotically optimal planners.  Let $X$ denote the state space.

\begin{defn}
\label{def:Kinodynamic}
A {\em feasible} (kinodynamic) planning problem $P=(X,U,x_I,G,F,B,D)$ asks to produce a trajectory $y(s):[0,S]\rightarrow X$ and control $u(s):[0,S]\rightarrow U$ such that:
\begin{alignat}{2}
y(0)&=x_I && \text{ (initial state) } \\
y(1)&\in G\subseteq X && \text{ (goal state)} \\
y(s)&\in F\subseteq X \quad \forall s\in[0,S] && \text{ (kinematic constraints)} \\
u(s)&\in B(y(s)) \quad \forall s\in[0,S] && \text{ (control constraints)} \\
y^\prime (s)&=D(y(s),u(s)) \,\forall s\in[0,S] && \text{ (dynamic equation)}
\end{alignat}
\end{defn}
This is a highly general formulation.  Note that kinematic planning problems can simply set the control variable $u(s)$ to the derivative of the path, the control set to $B = \{u \quad | \quad \|u\| \leq 1 \}$, and the dynamic equation as $D(y,u)=u$.  Second-order planning problems (i.e., those with inertia) can be defined with a configuration $\times$ velocity state $x=(q,q')$.  Problems with time-variant constraints can be constructed in this same form by augmenting the state variable with the time variable $(x,t)$.

\begin{defn}
\label{def:Optimal}
An {\em optimal} planning problem $P=(L,\Phi,X,U,x_I,G,F,B,D)$ asks to produce a trajectory $y(s):[0,S]\rightarrow X$ that minimizes the objective functional:
\begin{equation}
C(y)=\int^S_0{L(y(s),u(s))ds}+\Phi (y(S))
\end{equation}
among all feasible trajectories (those that satisfy (1--5)).  Here $L$ is the {\em incremental cost} and $\Phi$ is the {\em terminal cost}.
\end{defn}

\begin{defn}
A {\em bounded-suboptimality} planning problem $(P,\epsilon)$ asks to find a trajectory satisfying $C(y)=C^*+\epsilon$, where $C^*$ is the cost of the optimal path and $\epsilon>0$ is a specified parameter.
\end{defn}

\begin{defn}
A {\em complete} planner $A$ finds a feasible solution to a problem $P$ when one exists, and terminates with ``failure'' if one does not.  Moreover, it does so in finite time.  A {\em probabilistically-complete} planner $A$ finds a feasible solution to a problem $P$, when one exists, with probability approaching 1 as more time is spent planning.  A planner $A$ is  {\em asymptotically-optimal} for the optimal planning problem $P$ if the cost $C(t)$ of the generated path approaches the optimum $C^*$ with probability 1 as more time $t$ is spent planning.
\end{defn}

\subsection{State-cost space equivalence}

Our first contribution is to demonstrate an equivalence of any optimal planning problem with that of a canonical \textit{state-cost form} in which the dependence on the incremental cost $L$ is eliminated.  In particular, we augment each state $x$ with the cost $c$ taken to reach it from $x_I$ to derive an expanded state $z=(x,c)$. 

\begin{theorem}
The optimal planning problem $P=(L,\Phi,X,U,x_I,G,F,B,D)$ is equivalent to a state-cost optimal planning problem {\em without incremental costs}
\[\hat{P}=(0,\hat{\Phi},X\times R^+,U,(x_I,0),G\times R^+,F\times R^+,B,\hat{D}),\]
in such a way that solutions to $\hat{P}$ are in one-to-one correspondence with solutions to $P$.  Here the terminal cost $\hat{\Phi}$ is given by
\begin{equation}
\hat{\Phi }\left(\left[ \begin{array}{c}
x \\ 
c \end{array}
\right]\right)=c+\Phi (x)
\end{equation}
And the dynamics $\hat{D}$ are given by
\begin{equation}
z^\prime=\left[ \begin{array}{c}
x^\prime \\ 
c^\prime \end{array}
\right]=\left[ \begin{array}{c}
D\left(x,u\right) \\ 
L\left(x,u\right) \end{array}
\right]
\end{equation} 
\end{theorem}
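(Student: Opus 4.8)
The plan is to exhibit an explicit bijection between feasible trajectories of $P$ and feasible trajectories of $\hat P$ that exactly preserves the objective value; the claimed one-to-one correspondence of solutions (and in particular of optimal solutions) then follows immediately. The core observation driving everything is that the augmented coordinate $c$ is not a free degree of freedom: it is pinned down by the initial condition $c(0)=0$ together with the appended dynamic equation $c^\prime=L(x,u)$, so along any trajectory it simply integrates the incremental cost. I therefore expect the whole argument to be a careful verification rather than a deep computation.

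First I would define the forward map. Given any feasible $(y,u)$ for $P$ on $[0,S]$, set $\hat y(s)=(y(s),c(s))$ with $c(s)=\int_0^s L(y(\tau),u(\tau))\,d\tau$, retaining the same control $u$. I would then check each constraint of Definition~\ref{def:Kinodynamic} for $\hat P$ in turn: the initial condition gives $\hat y(0)=(x_I,0)$; the goal and kinematic memberships follow from $y(S)\in G$ and $y(s)\in F$ together with $c\in R^+$; the control constraint is unchanged because $B$ is unchanged; and the dynamic equation $z^\prime=\hat D(z,u)$ holds componentwise, since the $x$-block reproduces $y^\prime=D(y,u)$ while the $c$-block is exactly $c^\prime=L(y,u)$ by the fundamental theorem of calculus.

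Next I would define the inverse map by projection: given a feasible $(\hat y,u)$ for $\hat P$, write $\hat y=(x,c)$ and return $(x,u)$. The $x$-block of $\hat D$, of $F\times R^+$, and of $G\times R^+$ reproduces precisely the constraints of $P$, so the projection is feasible for $P$. The two maps are mutually inverse because the $c$-component of any $\hat P$-trajectory is uniquely determined: it solves $c^\prime=L(x,u)$ with $c(0)=0$, whose only solution is the quadrature above. This uniqueness is what makes the correspondence genuinely one-to-one rather than one-to-many, and it is the step I would state most carefully. Finally, since $\hat P$ has zero incremental cost, its objective collapses to the terminal term $\hat\Phi(\hat y(S))=c(S)+\Phi(x(S))=\int_0^S L\,ds+\Phi(y(S))=C(y)$, matching $P$ exactly; a cost-preserving bijection carries minimizers to minimizers.

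I expect the only genuine subtlety to be bookkeeping around the cost domain $R^+$: the forward construction presumes the accumulated cost $c(s)$ stays nonnegative so that $\hat y$ actually lands in $X\times R^+$, which holds under the usual assumption $L\ge 0$ but would otherwise require enlarging the cost axis to $R$. Everything else is routine: there is no ODE existence/uniqueness obstacle, because the appended equation is a pure quadrature in the already-fixed $x(s)$ and $u(s)$ rather than a genuinely coupled system.
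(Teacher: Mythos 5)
Your proposal is correct and follows exactly the route the paper sketches: augment trajectories of $P$ with the accumulated cost $c(s)=\int_0^s L\,d\tau$ to map into $\hat P$, project back onto the first $\dim(X)$ coordinates for the inverse, and observe that the objective of $\hat P$ collapses to $\hat\Phi(\hat y(S))=C(y)$. Your explicit remarks on the uniqueness of the $c$-component and on the implicit assumption $L\ge 0$ needed for the cost axis $R^+$ are worthwhile elaborations of details the paper leaves unstated, but they do not constitute a different approach.
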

The proof is straightforward, showing that the projection of solutions to $\hat{P}$ onto the first $dim(X)$ elements are solutions to $P$, and solutions to $P$ can be mapped to solutions to $\hat{P}$ via augmenting them with a cumulative cost dimension.  Note that even if every state in the original space was reachable, not all points in the state-cost space are reachable.  Moreover, the goal set is now a cylinder with infinite extent in the cost direction (Fig.~\ref{fig:StateCostGoals}.a).

Using this state-cost transformation, we derive a corollary that states that a boundedly-suboptimal problem with cost at most $\bar{c}$ can be solved by solving a feasible planning problem (Fig.~\ref{fig:StateCostGoals}.b).  
\begin{corollary}
A bounded-suboptimality planning problem $P$ with $\epsilon = \bar{c}-C^*$
 is equivalent to a feasible planning problem $P_{\bar{c}}=(X\times R^+,U,(x_I,0),G_{\bar{c}},F\times R^+,B,\hat{D})$ where
$G_{\bar{c}}=\{(x,c)\ |\ x\in G,c\in [0,\bar{c}-\Phi (x)]\}$ is the set of terminal state-cost pairs satisfying the goal condition and the cost bound, and $\hat{D}$ is given by the state-cost transformation.
\end{corollary}
Specifically, if $y$ is a solution to $P_{\bar{c}}$, then it corresponds to a feasible solution of $P$ with cost no more than $\bar{c}$ (and no less than $C^*$).  Also, $P_{\bar{c}}$ has no solution if and only if $\bar{c}<C^*$.

\subsection{Bounded-suboptimality meta-planning with a complete feasible planner}

The above corollary suggests that bounded-suboptimality planning is equivalent to feasible kinodynamic planning; however, $C^*$ is a priori unknown.  Hence, we present a bounded-suboptimality meta-planner that repeatedly invokes a feasible planner while lowering an upper bound on cost.  This idea builds some intuition for the asymptotically-optimal planner presented in the following section.

\begin{figure*}
\centering
\begin{tabular}{ccc}
\includegraphics[width=0.3\textwidth]{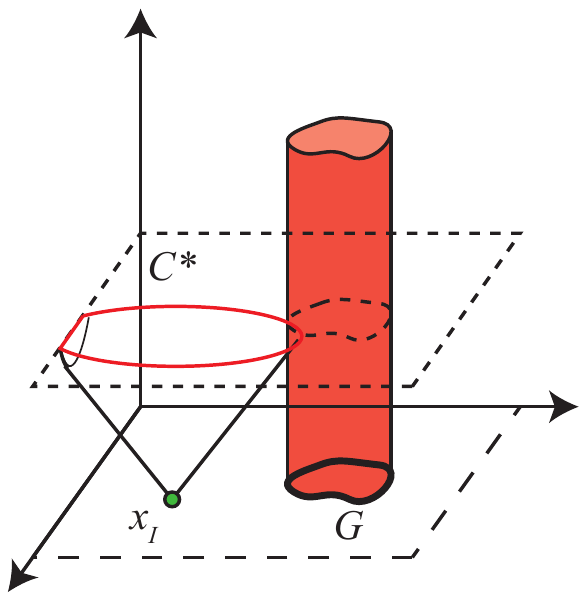} &
\includegraphics[width=0.3\textwidth]{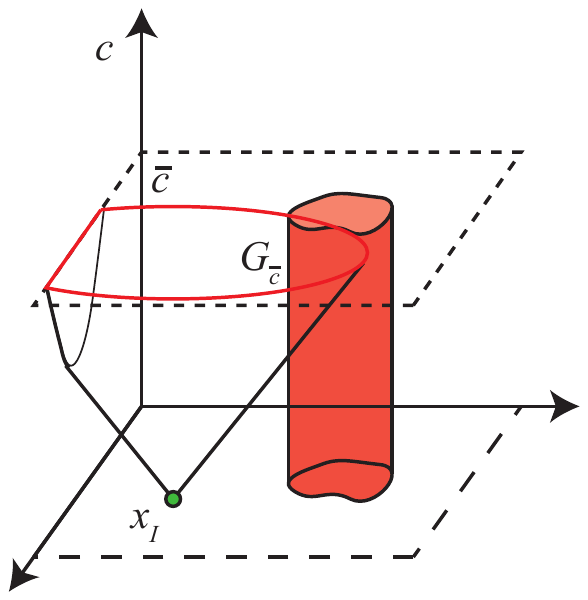} &
\includegraphics[width=0.3\textwidth]{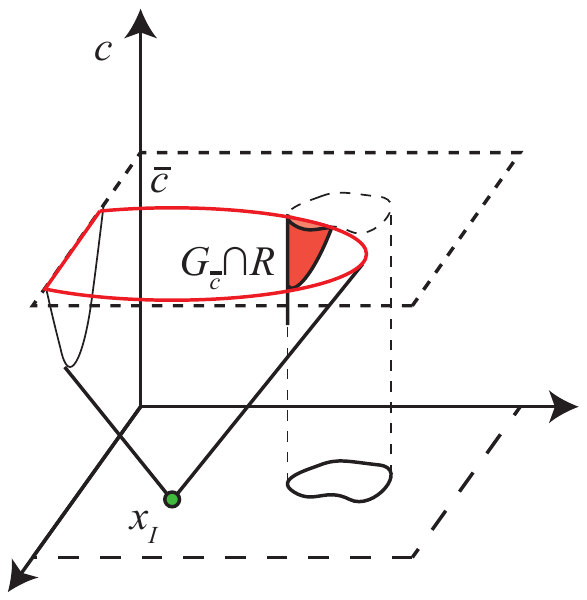} \\
(a) & (b) & (c)
\end{tabular}
\caption{(a) The goal region in state-cost space extends the state-space goal region $G$ infinitely along the cost axis. The optimal cost $C^*$ is the least-cost portion of the reachable set that touches the region.  (b) Given the cost of an existing path $\bar{c}$, the optimal path is known to lie in the region of space with cost lower than $\bar{c}$. Finding a trajectory that improves upon $\bar{c}$ is a feasible planning problem. (c) The reachable portion of this goal set shrinks as $\bar{c}$ approaches the optimum.}
\label{fig:StateCostGoals}
\end{figure*}

The meta algorithm Bounded-Suboptimal($P$,$\epsilon$,$A$) accepts as input a problem $P$, a tolerance $\epsilon$, and a complete feasible planning algorithm $A$, and is listed as follows:

\begin{algorithm}
\caption{Bounded-Suboptimal($P$,$\epsilon$,$A$)}
\label{alg:BoundedSuboptimal}
\begin{algorithmic}[1]
\State Run $A(P_{\infty })$ to obtain a first path $y_0$.  If no solution exists, then report `$P$  has no solution'. 
\State Let $c_0=C(y_0)$.  
\For{$i=1,2,\ldots$ }
  \State Run $A(P_{c_{i-1}-\epsilon})$  to obtain a new solution $y_i$. If no solution to $P_{c_{i-1}-\epsilon}$ exists, then stop.
  \State Let $c_i=C(y_i)$.  
\EndFor
\end{algorithmic}
\end{algorithm}

Step 1 solves for a feasible solution to the original problem, with no limit on cost.  In practice, it may be solved in the original state space simply by discarding the cost function.  In the loop, Step 4 establishes a new cost upper bound by lowering the best cost found so far $c_{i-1}$ by $\epsilon$.  The following theorem proves correctness of this meta-algorithm.

\begin{theorem}
If $A$ is a complete planner for the feasible kinodynamic planning problem, then Bounded-Suboptimal($A$,$\epsilon$) terminates in finite time and produces a path $y_i$ with cost no more than $C^*+\epsilon$.
\end{theorem}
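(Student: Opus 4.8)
The plan is to prove the two assertions separately: that the loop executes only finitely many times, so that together with the finite running time of each invocation of the complete planner $A$ the whole procedure halts in finite time, and that the last path returned satisfies the cost bound. Both arguments rest entirely on the Corollary, which pins down exactly when the feasible subproblem $P_{\bar c}$ is solvable and what the cost of any solution must be. Before anything else I would dispose of the degenerate case: if $P$ has no feasible solution at all, then by completeness $A(P_\infty)$ reports failure in finite time at Step 1 and the claim holds vacuously, so from here on I assume a feasible path exists and $C^*$ is finite.

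The core of the termination argument is a monotone-decrease invariant on the running best cost $c_i$. Suppose iteration $i$ succeeds, producing $y_i$ as a solution to $P_{c_{i-1}-\epsilon}$. The Corollary states that any solution to the feasible problem $P_{\bar c}$ has cost no more than $\bar c$; applying this with $\bar c = c_{i-1}-\epsilon$ gives $c_i = C(y_i) \le c_{i-1}-\epsilon$, so each successful iteration drives the cost down by at least $\epsilon$. The Corollary also guarantees that every feasible path has cost at least $C^*$, hence $c_i \ge C^*$ for all $i$. Chaining these yields $C^* \le c_i \le c_0 - i\epsilon$, which forces $i \le (c_0 - C^*)/\epsilon$; thus at most $\lceil (c_0-C^*)/\epsilon\rceil$ successful iterations can occur before some call to $A$ must return failure. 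Since each call terminates in finite time by completeness of $A$, the meta-planner terminates in finite time.

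To read off the suboptimality bound I would examine the halting condition. The loop stops precisely when, at some iteration $i$, the call $A(P_{c_{i-1}-\epsilon})$ reports no solution. Because $A$ is complete, a failure report means $P_{c_{i-1}-\epsilon}$ genuinely has no solution, and the Corollary's ``no solution if and only if $\bar c < C^*$'' clause translates this into $c_{i-1}-\epsilon < C^*$, i.e. $c_{i-1} < C^* + \epsilon$. The last path actually produced is $y_{i-1}$, whose cost is $c_{i-1} < C^* + \epsilon$, which is the desired bound.

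I expect the only delicate point to be the careful coupling of the completeness of $A$ with the Corollary: completeness is what licenses interpreting a ``failure'' as genuine infeasibility of $P_{\bar c}$, and the Corollary is what converts that infeasibility into the inequality $\bar c < C^*$. One should also take care with the indexing at the boundary so that the path credited with the final cost is the last \emph{successful} one rather than the iteration that failed. Beyond these points, the remainder is routine bookkeeping on the strictly decreasing, lower-bounded sequence $c_i$.
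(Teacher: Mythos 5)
Your proof is correct and follows essentially the same route as the paper's: termination because each successful iteration lowers the cost by at least $\epsilon$ while it remains bounded below by $C^*$ (giving the same $\lceil (c_0-C^*)/\epsilon\rceil$ bound), and the suboptimality bound by combining completeness of $A$ with the Corollary's infeasibility criterion to conclude $\bar c < C^*$ at the failing iteration. Your version is merely more explicit about the degenerate infeasible case and about which clauses of the Corollary are being invoked.
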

\begin{proof} Let $i+1$ be the index of the final iteration.  In the prior iteration, a solution was found, so $c_{i} \ge C^*$.  In the current iteration, no solution is found, and since $A$ is complete, the current planning problem, $P_{c_{i}-\epsilon}$ is infeasible. So, $c_{i}-\epsilon<C^*$, and therefore $c_{i} = C(y_{i})$ is within $\epsilon$ of optimal.

Running time is finite since each loop reduces the cost by at least $\epsilon$, and hence the inner loop is run no more than $\left\lceil \frac{c_0-C^*}{\epsilon}\right\rceil $ times.
\end{proof}

\subsection{Asymptotically-optimal meta-planning with a randomized feasible planner}

The need for a complete planner is too restrictive for practical use on high-dimensional problems, where only probabilistically complete planners are practical.  Here, we relax this restriction while also eliminating the dependence on the parameter $\epsilon$, under the unrestrictive assumption that the cost is lowered by a nonnegligible fraction whenever $A$ finds a feasible path.

We will need to make some assumptions such that $A$ is ``well-behaved'' so that it has a significant chance of finding a path that shrinks the best cost found so far regardless of $\bar{c}$.  Given some cost upper bound $\overline{c}$, define the cost of the next produced solution follow a cumulative density function $\varphi(C(y);\bar{c})$.  This function ranges from 0 to 1 on the support $[C^*,\bar{c}]$, i.e., $P\left(C(y)\le z\right)=\varphi(z;\bar{c})$.  We {\em do not} prescribe any form for this distribution, however, we do require one condition for its moment.

Well-behavedness of $A$ requires two conditions:
\begin{enumerate}
\item If there exists a feasible solution and $\bar{c} > C^*$, then $A$ terminates in finite time.
\item Given a cost bound $\bar{c}$ expected suboptimality of the computed path is shrunk toward $C^\star$ by a non-negligible amount each iteration.  (In practice, this means that there is a nonzero chance that the planner does not produce the worst-possible path).
\end{enumerate}
Specifically, condition 2 requires that:
\begin{equation}
E[C(y) | \overline{c}]  \leq (1-w) (\overline{c}-C^*)
\label{eq:Shrinkage}
\end{equation}
for some $w > 0$ (the $w$ is required for technical reasons; for most cases this condition enforces that $E[C(y) | \overline{c}]  < (\overline{c}-C^*)$).  This condition is not overly restrictive for most randomized planners; the set of paths with $C(y) = \overline{c}$ is a set with measure zero in the space of paths, and is unlikely to be sampled at random.

We are now ready to present the main algorithm, AO-$x$.

\begin{algorithm}
\caption{Asymptotically-optimal($P$,$A$,$n$)}
\label{alg:AsymptoticallOptimal}
\begin{algorithmic}[1]
\State Run $A(P_{\infty })$ to obtain a first path $y_0$.  If no solution exists, report `$P$  has no solution'. 
\State Let $c_0=C\left(y_0\right)$.  
\For {$i=1,2,\ldots,n$}
  \State  Run $A(P_{c_{i-1}})$  to obtain a new solution $y_i$.
  \State Let $c_i=C\left(y_i\right)$.
\EndFor
\Return $y_n$
\end{algorithmic}
\end{algorithm}

\begin{theorem} If $A$ is a well-behaved randomized algorithm, then Asymptotically-optimal($A,n$) is asymptotically optimal.  In other words, as $n$ approaches infinity, the probability that $y_n$ is not an optimal path approaches zero.
\end{theorem}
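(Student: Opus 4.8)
The plan is to track the \emph{suboptimality gap} $\delta_i = c_i - C^*$ and show it tends to $0$, which by the paper's definition of asymptotic optimality is precisely what must be established. First I would record two structural facts about the iteration. Since Step~1 returns a feasible path, $\delta_0 = c_0 - C^*$ is finite. Moreover, at iteration $i$ the previous solution $y_{i-1}$ is itself a feasible solution of $P_{c_{i-1}}$ (it reaches the goal at total cost exactly $c_{i-1}$), so by the Corollary $P_{c_{i-1}}$ is feasible; well-behavedness condition~1 then guarantees that $A$ terminates and returns a path $y_i$, whose cost by construction of $P_{c_{i-1}}$ satisfies $c_i \le c_{i-1}$. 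Hence $\{\delta_i\}$ is a nonnegative, non-increasing sequence.

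Next I would invoke the shrinkage condition with $\bar c = c_{i-1}$. Read as a statement about suboptimality (matching the text ``expected suboptimality\ldots is shrunk''), it becomes $E[\delta_i \mid c_{i-1}] \le (1-w)\,\delta_{i-1}$. Taking total expectations and using the tower property, the bound iterates to
\[
E[\delta_n] \;\le\; (1-w)^n\, E[\delta_0].
\]
Because $w>0$ we have $0 \le 1-w < 1$, so the right-hand side tends to $0$; since $E[\delta_0]<\infty$ this gives $E[\delta_n] \to 0$. By Markov's inequality $P(\delta_n \ge \eta) \le E[\delta_n]/\eta \to 0$ for every $\eta>0$, which already yields convergence of the cost to $C^*$ in probability.

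To obtain the stronger almost-sure statement matching the formal definition, I would combine the two ingredients. Because $\delta_n$ is monotone and bounded below by $0$, it converges pathwise to a limit $\delta_\infty \ge 0$. Dominated convergence (with the integrable dominator $\delta_0$) gives $E[\delta_\infty] = \lim_n E[\delta_n] = 0$, and a nonnegative random variable with zero mean vanishes almost surely. Thus $\delta_n \to 0$, i.e.\ $c_n \to C^*$, with probability $1$.

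The main obstacle I anticipate is not the contraction estimate, which is routine once the shrinkage inequality is understood as a bound on suboptimality, but rather the correct bridging between the moment bound $E[\delta_n]\to 0$ and a probabilistic claim about $y_n$. The clean upgrade to almost-sure convergence hinges on the monotonicity of $\delta_n$, which in turn hinges on observing that each subproblem $P_{c_{i-1}}$ stays feasible because $y_{i-1}$ witnesses feasibility at cost $c_{i-1}$. I would also flag a point of interpretation: for a continuous problem the exact optimum is typically attained with probability $0$ at any finite $n$, so the phrase ``probability that $y_n$ is not optimal approaches zero'' should be read as convergence of $C(y_n)$ to $C^*$ in the sense of the paper's definition, not as $P(c_n = C^*)\to 1$.
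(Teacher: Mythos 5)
Your proposal is correct and follows essentially the same route as the paper: iterate the conditional shrinkage bound to get $E[\delta_n]\le(1-w)^nE[\delta_0]$, apply Markov's inequality for convergence in probability, and use monotonicity of the suboptimality sequence to upgrade to almost-sure convergence (the paper phrases this upgrade via $\sup_{m\ge n}X_m=X_n$ rather than your dominated-convergence argument, but the underlying idea is identical). Your explicit justification of why each subproblem $P_{c_{i-1}}$ remains feasible, and your reading of the shrinkage condition as a bound on suboptimality, match the paper's intent.
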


\begin{proof}
Let $X_0,\ldots,X_n$ be the nonnegative random variables denoting the suboptimality $C\left(y_i\right)-C^*$ during a run of the algorithm.  We will show that they converge almost surely to the optimum as $n$ increases.  That is we want to show that $P(\mathop{lim}_{n\rightarrow \infty} X_n = 0) = 1.$

Almost sure (a.s.) convergence is equivalent to ${\mathop{\lim }_{n\to \infty } P\left(\mathop{\sup }_{m\ge n} X_m  >\epsilon \right)\ }=0$. Since $\mathop{\sup }_{m\ge n} X_m = X_n$, a.s. convergence is implied by convergence in probability $\mathop{\lim }_{n\to \infty } P(X_n>\epsilon)=0$.  To prove convergence in probability, we will prove $\mathop{\lim }_{n\to \infty } E[X_n] =0$ and then use the Markov inequality $P(X_n\ge \epsilon)\le E[X_n]/\epsilon$.

Conditioning on $X_{n-1}$, we get:
\[E[X_n]=\int E[X_n\ |\ x_{n-1}]P(x_{n-1})dx_{n-1}\] 
and due to \eqref{eq:Shrinkage} we have $E[X_n\ |\ x_{n-1}]\le (1-w) x_{n-1}$.  Hence, 
\begin{equation}
\begin{split}
E[X_n]&=(1-w)\int x_{n-1}P(x_{n-1})dx_{n-1} \\
    &=(1-w)E[X_{n-1}]=(1-w)^n E[X_0]
\end{split}
\end{equation}
and thus,
\begin{equation}
\label{eq:ProbabilityOfSuboptimalityN}
P(X_n\ge \epsilon)\le E[X_0](1-w)^n/\epsilon 
\end{equation}

Clearly this approaches 0 as $n$ increases. 
\end{proof}

\subsection{Convergence rate with respect to time}

We now take a more detailed analysis of the case in which the feasible planner is probabilistically complete, and study the convergence of Asymptotically-Optimal in terms of running time $t$ rather than the number $n$ of planner calls. We show again, under relatively weak assumptions, that Asymptotically-Optimal is asymptotically optimal in terms of time, even though each call to the planner takes increasingly longer to complete as $n$ increases because the reachable portion of the goal set shrinks (Fig.~\ref{fig:StateCostGoals}.c).

A planner is probabilistically complete if the probability that it finds a feasible path, if one exists, approaches 1 as more time is spent planning.  Note that a probabilistically complete planner will not necessarily terminate if no feasible path exists.

Note that probabilistic completeness is not a sufficient condition for a planner to be useful, since the convergence rate may be so slow that it is impractical.  As an example, let $A$ be a probabilistically complete planner, and $f(t)$ denote $P(\text{A fails given $t$ seconds of planning})$.  If $f(t) = 1/t$, then expected running time is \textit{infinite}. 

We will assume that for the given $X$, $x_I$, $F$, and $\hat{D}$ the planner $A$ satisfies an exponential convergence bound, in which $f(t)\le \max \left(1,\alpha e^{-\beta t}\right)$ for some positive values  $\alpha$ and $\beta$. In practice, an exponential convergence bound implies expected running time is finite.
\[E[t]\le \int^{\infty }_0{t f(t) dt }\le \frac{a}{\beta}\] 
A more refined analysis~\cite{HL10} gives a tighter bound
\begin{equation}
\begin{split}
\int^{\infty }_0{tf(t)dt } = & \int^{(\ln \alpha)/\beta}_0{ t dt }+\int^{\infty }_{(\ln  a)/\beta}{t\alpha e^{-\beta t}dt} \\
 = & \frac{\ln  \alpha}{\beta}+\frac{1}{1-e^{-\beta}}
\end{split}
\end{equation}

Convergence rate varies, however, depending on the {\em reachable} portion of the goal region $G_{\overline{c}} \cap R(x_I)$ where $R(x)$ is the reachability set of $x$ in state-cost-space (Fig.~\ref{fig:StateCostGoals}.c).  In particular, a small goal region makes it rare to $A$ to sample a configuration in it at random, which slows convergence.  Hence, the convergence rates are properly defined as a function of the volume of the goal region:

\begin{equation}
\alpha \equiv \alpha(\mu(G_{\overline{c}}\cap R(x_I))), \beta\equiv \beta (\mu (G_{\overline{c}}\cap R(x_I)))
\end{equation}
The following theorem gives an example of such a bound when the EST algorithm is used as the underlying feasible planner.

\textbf{Theorem}. Assuming the space is \textit{expansive}, the Kinodynamic EST planner (Hsu, Latombe, Kindel, and Rock 2001) satisfies an exponential convergence bound with constants $\alpha(g)=\gamma \ln  \frac{1}{g}$ and $\beta(g)=\delta g$ for positive constants $\gamma$ and $\delta$, where $g$ is the volume of the reachable goal region.  Moreover, $E[t]$ is $O\left(\frac{1}{g}{\ln  {\ln  \frac{1}{g}} }\right)$ as $g$ approaches 0.

\begin{proof}  From Hsu, Latombe, Kindel, and Rock 2002~\cite{HKLR02}, Kinodynamic EST with a uniform sampling strategy fails to find a path with probability no more than $p$ if at least $\frac{k}{\alpha} \ln  \frac{2k}{p} +\frac{2}{g} \ln \frac{2}{p}$ milestones are sampled, where $k=\frac{1}{\beta} \ln \frac{2}{g}$ and $\alpha$ and $\beta$ are expansiveness constants that are fixed for the given configuration space (not related to the $\alpha$ and $\beta$ defined above). Using a bit of algebra, this expression can be rewritten as a bounded probability of failure:
\[f(t)\le {\left(2k\right)}^{\frac{kg}{kg+2\alpha}}\cdot 2^{\frac{2\alpha}{kg+2\alpha}}\cdot e^{\frac{-t \alpha g}{kg+2\alpha}}\] 
Here we have assumed that each sample takes constant time and the constant factor is ignored. 

Now we will simplify this rather unwieldy expression.  First, note that the exponents of the first two terms in the equation are upper bounded by 1 since they are ratios of two positive numbers to their sums, and hence
\[f(t)\le 4k\cdot e^{\frac{-t \alpha g}{kg+2\alpha}}\] 
Next, we use the fact that $\ln x \le x$.  Hence, $k=\frac{1}{\beta} \ln  \frac{2}{g} \le \frac{2}{\beta g}$.  The factor $\frac{\alpha g}{kg+2\alpha}$ in the exponent can now be lower bounded by $\frac{\alpha\beta}{2+2\alpha\beta}g$ and hence we have the desired expression
\begin{equation}
f(t)\le \gamma \ln  \frac{1}{g} e^{-t\delta g}\
\end{equation}
With $\gamma=\frac{8}{\beta}$ and $\delta=\frac{\alpha\beta}{2+2\alpha\beta}$ constant for a given configuration space. As a result, the running time is bounded by $E\left[t\right]\le \frac{1}{\delta g}\left[{\ln \gamma}+{\ln  {\ln  \frac{1}{g} }}\right]+\frac{1}{1-e^{-\delta g}}$. As $g$ shrinks, the latter term's order of convergence is $\frac{1}{g}$, so we can conclude that $E[t]$ is $O\left(\frac{1}{g}{\ln  {\ln  \frac{1}{g} } }\right)$ as desired.
\end{proof}

\begin{figure}
\centering
\includegraphics[width=0.9\linewidth]{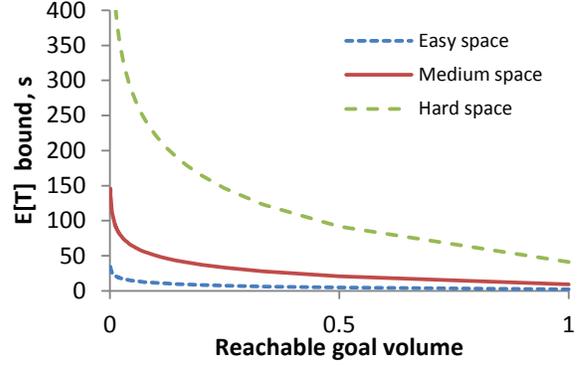}
\caption{Running time for each invocation of the underlying planning subroutine increases asymptotically to infinity as the reachable goal volume decreases.  The visibility characteristics of the underlying space also have major effects on running time. Plots illustrate the theoretical bounds on expected running time of EST for spaces of three different ``difficulty'' levels.  Specifically, visibility constants $\alpha$ and $\beta$ are set to 0.04, 0.02, and 0.01 for easy, medium, and hard spaces.  It is assumed that 1,000 samples are generated per second.}
\label{fig:ESTTimeGoalVolume}
\end{figure}

Fig.~\ref{fig:ESTTimeGoalVolume} illustrates this bound.  It is apparent that, since $G$ is fixed by the original problem, $G_{\overline{c}}$ varies only with the parameter $\overline{c}$.  Hence we may also state these functions as $\alpha(\overline{c})$ and $\beta(\overline{c})$.  In order for EST to be ``well-behaved'' as defined above, we must require that as $\overline{c}$ approaches $C^*$, the volume of $G_{\overline{c}} \cap R(x_I)$ is nonzero as long as $\overline{c} > C^*$.

Let us now state our main result regarding AO-$A$, which is the planner defined as Asymptotically-Optimal($P,A,\infty $).

\begin{theorem}If $A$ is a probabilistically complete, exponentially convergent planner, then AO-$A$ is asymptotically optimal in total running time $t$.
\end{theorem}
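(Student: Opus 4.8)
The plan is to combine the almost-sure convergence in the number of planner calls (established in the earlier Theorem for well-behaved randomized planners) with the exponential-convergence time bound for each individual call, converting a statement about the iteration count $n$ into a statement about cumulative wall-clock time $t$. The key subtlety, already flagged in the surrounding text and in Fig.~\ref{fig:StateCostGoals}.c, is that the per-call running time is \emph{not} bounded by a constant: as $\overline{c}$ descends toward $C^*$, the reachable goal volume $g = \mu(G_{\overline{c}} \cap R(x_I))$ shrinks, so by the convergence-rate analysis the expected time $E[t_i]$ of the $i$-th call grows like $\frac{1}{g}\ln\ln\frac{1}{g}$ and diverges in the limit. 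So I must rule out the bad scenario in which the iterations slow down fast enough that the total time $T_n = \sum_{i=1}^n t_i$ converges to a finite limit while $n$ still tends to infinity only ``at time infinity.''

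\textbf{First} I would verify that AO-$A$ performs infinitely many planner calls almost surely as $t \to \infty$. Because $A$ is probabilistically complete and exponentially convergent, each call $A(P_{c_{i-1}})$ terminates in finite time with probability one whenever the current problem is feasible (which it is, since $c_{i-1} > C^*$ with probability one before convergence); hence each $t_i$ is almost surely finite, and the algorithm does not stall on any single iteration. \textbf{Next} I would show that the cumulative running time $T_n = \sum_{i=1}^n t_i$ diverges to $+\infty$ almost surely, so that letting $t \to \infty$ is genuinely equivalent to letting $n \to \infty$: since each $t_i \ge 0$ and the $t_i$ are almost surely positive with nonnegligible expectation, $T_n \to \infty$. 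Combined with the earlier result that $X_n = C(y_n) - C^* \to 0$ almost surely as $n \to \infty$, this gives that for any $\epsilon > 0$, the event $\{X_n > \epsilon\}$ eventually fails; and since the algorithm's reported cost is monotonically nonincreasing (each call only accepts a strictly improving path under the cost bound $c_{i-1}$), the cost $C(t)$ observed at time $t$ is exactly $C(y_{n(t)})$ where $n(t) = \max\{i : T_i \le t\}$, and $n(t) \to \infty$ as $t \to \infty$.

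\textbf{The main obstacle} is making the limit-swap rigorous: I need $n(t) \to \infty$ almost surely, which fails precisely if $T_\infty = \sum_{i=1}^\infty t_i < \infty$ on a positive-probability event. To exclude this I would argue that the expected per-call times $E[t_i]$ are bounded \emph{below} away from zero (indeed they \emph{grow}), so $\sum_i t_i$ cannot converge: even if the goal volume shrinks, the lower bound on $E[t_i]$ ensures the partial sums diverge. A clean way to formalize this is to note that, on the event of non-convergence in finite time, there are infinitely many iterations, each taking a positive amount of time, forcing $T_n \to \infty$; and on the complementary event (exact convergence after finitely many calls) the cost has already reached $C^*$, so the conclusion holds trivially. \textbf{Finally}, assembling these pieces, for any $\epsilon > 0$ I would conclude $\lim_{t \to \infty} P\big(C(t) - C^* > \epsilon\big) = \lim_{n \to \infty} P(X_n > \epsilon) = 0$, which is exactly asymptotic optimality in running time. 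The delicate point throughout is that the growing per-call cost is a help rather than a hindrance here: it is what guarantees $T_n$ diverges, so the $t \to \infty$ and $n \to \infty$ limits coincide.
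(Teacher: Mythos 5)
Your route is genuinely different from the paper's, and the difference matters. The paper never tries to show that the iteration counter $n(t)$ tends to infinity; instead it fixes $\epsilon>0$, lets $T(\epsilon)$ be the first time the suboptimality drops below $\epsilon$, and proves $E[T(\epsilon)]<\infty$ so that Markov's inequality gives $P\left(Z(t)\ge\epsilon\right)\le E[T(\epsilon)]/t\to 0$. The key observation you are missing is the one that makes the growing per-call cost harmless \emph{quantitatively}: as long as the current suboptimality still exceeds $\epsilon$, the goal region $G_{c_{i-1}}\cap R(x_I)$ \emph{contains} $G_{C^*+\epsilon}\cap R(x_I)$, so by monotonicity of the exponential-convergence constants in the goal volume, every such call has expected duration at most a single fixed constant $t_\epsilon$. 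Combined with the geometric tail bound $P(N=i)\le E[X_0](1-w)^{i-1}/\epsilon$ on the number $N$ of iterations needed (derived from the well-behavedness condition \eqref{eq:Shrinkage} exactly as in the preceding theorem), this gives $E[T(\epsilon)]\le t_\epsilon E[N]<\infty$ and, with the EST bound, the explicit rate $O\left(\epsilon^{-2}\ln\ln\epsilon^{-1}\right)$ advertised in the abstract. Your composition of two almost-sure limits ($X_n\to 0$ and $n(t)\to\infty$) does establish the qualitative statement and uses less than the paper assumes (only finite termination of each call, not the exponential bound), but it yields no convergence rate, which is the main content the hypothesis of exponential convergence is there to buy.

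One genuine confusion to fix: your ``main obstacle'' is not an obstacle, and your resolution of it is both unnecessary and incorrect. If $T_\infty=\sum_i t_i<\infty$ on some event, then on that event infinitely many iterations complete by the finite time $T_\infty$, so $n(t)\to\infty$ holds trivially there --- finiteness of $T_\infty$ can only help you. The actual failure mode for $n(t)\to\infty$ is a single call taking infinite time, which you already dispatch in your first step via feasibility of $P_{c_{i-1}}$ for $c_{i-1}>C^*$. Moreover, the argument you offer (expected per-call times bounded below imply $\sum_i t_i=\infty$) is not valid: a lower bound on $E[t_i]$ does not preclude $\sum_i t_i<\infty$ with positive probability. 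Deleting that paragraph entirely leaves your proof intact; keeping it as written introduces an incorrect inference.
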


\begin{proof}
Define $c(t)$ as the cost of the best path found so far in AO-$A$ after time t has elapsed.  We will show that it converges almost surely toward C* as t increases.  Let Z$(t)$ be the random variable denoting the suboptimality $c(t)-C^*$ during a run of Asymptotically-optimal($A,\infty $). We wish to prove that $\mathop{\lim }_{t\to \infty } P(Z(t)\ge \epsilon)=0$  for any $\epsilon>0$. 

Let $T(x)$ be the random variable denoting the time at which the cost of the best path found so far decreases below $x+C^*$. It is evident that  $P\left(Z(t)\ge \epsilon\right)=P\left(T(\epsilon)\ge t\right)$.  We wish to show that $E\left[T(\epsilon)\right]<\infty $, which would in turn imply the claim due to the Markov inequality $P\left(T(\epsilon)\ge t\right)\le E\left[T(\epsilon)\right]/t$ .

If the current cost bound is greater than $\epsilon$, then the expected time to find a path for any iteration is upper-bounded by the expected time it would take to find a path to $G_{C^*+\epsilon}$, which is some finite value $t_\epsilon$ since $A$ is exponentially convergent. Specifically, $t_\epsilon=\frac{\epsilon}{\delta\epsilon^2}$ for kinodynamic EST as shown above.  So, if Asymptotically-optimal first finds a path with suboptimality no more than $\epsilon$ on the i'th iteration, then the cost expended is no more than $it_\epsilon$.

If we let $N$ denote the random variable of the iteration on which Asymptotically-optimal first finds a path with suboptimality no more than $\epsilon$, then we can bound $E\left[T(\epsilon)\right]$ as follows:
\[E\left[T(\epsilon)\right]\le \sum^{\infty }_{i=0}{it_\epsilon P\left(I=i\right)}=t_\epsilon \sum^{\infty }_{i=0}{iP\left(N=i\right)}=t_\epsilon E[N]\] 
To show that $E[N]$ is finite, we will take a variant of the proof of Theorem 3.  Again let $X_0,\dots ,X_n$ be the nonnegative random variables denoting the suboptimality $C\left(y_i\right)-C^*$ during a run of the algorithm.  $N$ is the index of the first $X_i$ that decreases below $\epsilon$. Hence,
\[P(N\le i)=P(X_i\le \epsilon)\] 
and
\begin{equation}
\begin{split}
P(N=i)& =P(N\le i)-P(N\le i-1) \\
 & =P(X_i\le \epsilon)-P(X_{i-1}\le \epsilon)
\end{split}
\end{equation}

We will use the cumulative probability function definition:
\[P(X_i\le z|X_{i-1})=\varphi(C^*+z;C^*+X_{i-1})\] 
and begin by conditioning on $X_{i-1}$.
\begin{equation}
\begin{split}
P&\left(X_i\le \epsilon\right)=  \int^{\infty }_0{P\left(X_i\le \epsilon | X_{i-1}\right)dP\left(X_{i-1}\right)} \\
 = &
\int^\epsilon_0{P\left(X_i\le \epsilon\ |\ X_{i-1}\right)dP\left(X_{i-1}\right)}  \\
     & +\int^{\infty }_\epsilon{P\left(X_i\le \epsilon | X_{i-1}\right)dP(X_{i-1})} \\
 = &
\int^\epsilon_0{1\cdot dP\left(X_{i-1}\right)}+\int^{\infty }_\epsilon{\varphi(C^*+\epsilon;C^*+X_{i-1})dP(X_{i-1})} \\
= & P\left(X_{i-1}\le \epsilon\right)+\int^{\infty }_\epsilon{\varphi(C^*+\epsilon; C^*+X_{i-1})dP(X_{i-1})}
\end{split}
\end{equation}
Hence, 
\begin{multline}
P(N=i)=\int^{\infty }_\epsilon{\varphi(C^*+\epsilon; C^*+X_{i-1} )dP\left(X_{i-1}\right)} \\
\le \int^{\infty }_\epsilon{dP\left(X_{i-1}\right)}=P\left(X_{i-1}\ge \epsilon\right)
\end{multline}

Where we have applied the bound $\varphi(x; y)\le 1$ which holds because $\varphi$ is a CDF.

We can now apply equation \eqref{eq:ProbabilityOfSuboptimalityN} derived in Theorem 3 to obtain $P(N=i)\le E\left[X_0\right](1-w)^{i-1}/\epsilon$ where $(1-w)<1$ is defined as before.  This upper bound has the form of a geometric distribution with parameter $w$.  So, $E[N]\le \frac{(1-w)}{w}E[X_0]/\epsilon$ which is finite.
\end{proof}

To be specific, if we were to use the exponential convergence bound for kinodynamic EST, we may conclude that $E\left[T(\epsilon)\right]$ is $O\left(\frac{1}{\epsilon^2}{\ln {\ln \frac{1}{\epsilon}}}\right)$.  

We note that this convergence bound is rather loose; earlier iterations will likely terminate much faster than $t_\epsilon$.

\subsection{Complexity Discussion}

The computational complexity of AO-$x$ is affected by several aspects of problem structure.  As remarked before, the visibility characteristics of the problem affect the running time of the feasible planning subroutine $x$.  As a result, the optimal parameters of $x$, such as the expansion distance in RRT, are problem dependent.  

We also note that planner performance in state space may be different from performance in (state, cost) space.  Adding a dimension of cost may increase both time and space complexity, and it also adds drift to problems that may originally be driftless.  We note, however, that the control space remains unchanged, and the performance of many planners are governed chiefly by control complexity.

Lastly, we observe that problem dimensionality does not have a direct relationship to the order of convergence of AO-$x$.  However, it does have a large impact in the running time of $x$, which is manifest in the terms $t_\epsilon$ and $w$ in the proof above.  Problems of higher dimension will tend to have a larger value of the term $t_\epsilon$, although it is easy to construct hard low dimensional problems.  The expected cost reduction $w$ is also dimensionality-dependent; for example, if the reachable goal region in (state, cost) space is locally shaped at the optimum like a convex cone of dimension $d$, then a goal configuration sampled at random will achieve an average cost reduction of $O(1/(d+1))$.

\section{Implementations and Experiments}

This section describes the application of AO-$x$ to several example problems using the feasible kinodynamic planners EST and RRT.  We will refer to the implementations as AO-EST and AO-RRT.  All planners are implemented in the Python programming language, and hence could be sped up greatly by the use of a compiled language.

\subsection{Implementations using RRT and EST}

Both kinodynamic EST and kinodynamic RRT are tree-growing planners that perform random extensions to a state-space tree, rooted at the start, by sampling a node in the tree and a control at random, and then integrating the dynamics forward over a short time horizon.  They differ by sampling strategy.  EST attempts to sample an extension so that its terminal state is uniformly distributed over the reachable set of the current tree.  RRT attempts to sample an extension so that it is pulled toward a randomly-sampled state in state space (a Voronoi bias).  Both methods can also incorporate goal biasing strategies to avoid excessive exploration of the state space in directions that are not conducive to reaching the goal.

{\bf EST Implementation.} EST can be applied directly to state-cost planning.  To approximate sampling over a uniform distribution over the tree's reachable set, it samples extensions with probability proportional to the inverse density of existing states in the tree.  We use the standard method to approximate density by defining a grid of resolution $h$ and low dimension $k$ over randomly chosen orthogonal projections of the state-cost space.  The density of a state $x$ is estimated as proportional to the number of nodes in the tree $N(x)$ contained in the same grid cell as $x$.  In a manner similar to locality sensitive hashing, we choose several grids and count the total number of nodes sharing the same cell as $x$ across all grids.  For our experiments, we use ${dim(X)+1}\choose{k}$ grids, $h=0.1$, and $k=3$, and scale the configuration space $X$ to the range $[0,1]^{dim(X)+1}$ before performing the random projection.  To extend the tree we sample 10 candidate extensions by choosing 10 source states uniformly from the set of occupied grid cells, and drawing one random control sample.  Among those extensions that are feasible, we select one with probability proportional to $1/(N(x_t)+1)^2$ where $x_t$ is its terminal state.  

{\bf RRT Implementation.} RRT can also be applied almost directly, but there are some issues to be resolved regarding the definition of a suitable distance metric. RRT relies on a distance metric to guide the exploration toward previously unexplored regions of state space, and is rather sensitive to the choice of this metric, with better performance as the metric approximates the true cost-to-go.  However, cost-to-go is usually difficult to estimate accurately particularly in the presence of complex obstacles and dynamic constraints.  Below, we empirically investigate the effects of the distance metric.  Nearest node selection is accelerated using a KD-tree data structure.

{\bf Performance considerations.} In both cases, rather than planning from scratch each iteration, we maintain trees from iteration to iteration, which leads to some time savings.  We also save time by pruning the portion of the tree with cost more than $\bar{c}$ whenever a new path to the goal is found.  Specifically, smaller trees make EST density updates and RRT nearest neighbor queries computationally cheaper, although RRT benefits more from this optimization because a larger fraction of its running time is spent in nearest neighbor queries.  We also prune more aggressively if a heuristic function $h(x)$ is available.  If $h(x)$ underestimates the cost-to-go, then we can prune all nodes such that $c+h(x) > \bar{c}$.  Other sampling heuristics could also be employed to bias the search toward low-cost paths~\cite{akgun2011sampling}.

\subsection{Example Problems}

\begin{figure}
\centering
\fbox{\includegraphics[width=0.45\linewidth]{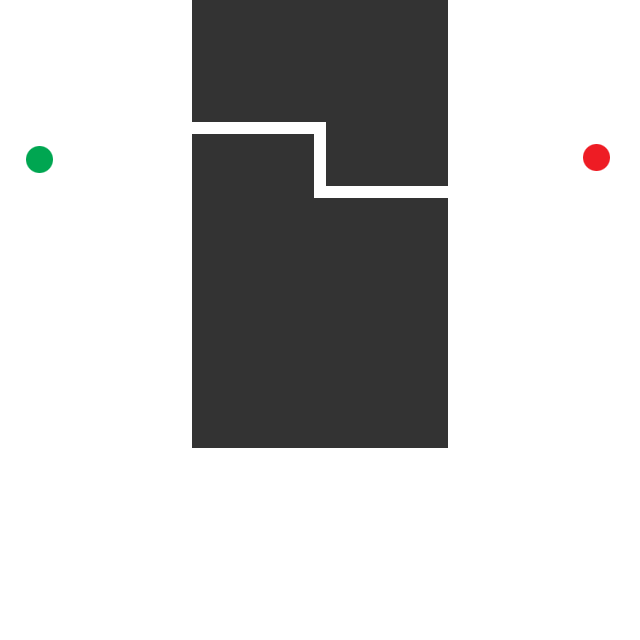} }
\fbox{\includegraphics[width=0.45\linewidth]{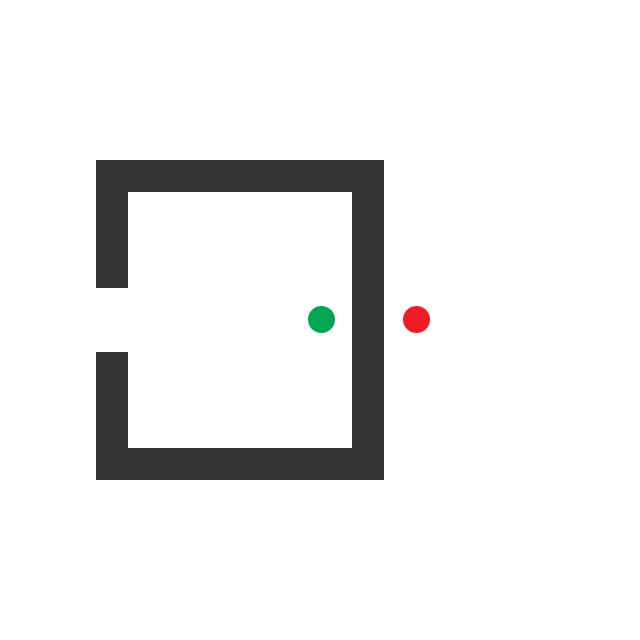} }
\caption{The Kink and Bugtrap problems ask to find the shortest path between the two indicated configurations.}
\label{fig:PlanarProblems}
\end{figure}

{\bf Kink.} The Kink problem (Fig.~\ref{fig:PlanarProblems}.a) is a kinematically-constrained problem in a unit square $[0,1]^2$ in which the optimal solution must pass through a narrow corridor of width 0.02 with two kinks.  The objective is to minimize path length.  Most planners very easily find a suboptimal homotopy class, but it takes longer to discover the optimal one.  The maximum length of each expansion of the tree is limited to 0.15 units. 

{\bf Bugtrap.} The Bugtrap problem (Fig.~\ref{fig:PlanarProblems}.b) is a kinematically-constrained problem in a unit square $[0,1]^2$ that asks the robot to escape a local minimum.  The objective function is path length. This is a challenging problem for RRT planners due to their reliance on the distance metric as a proxy of cost.  The maximum length of each expansion of the tree is limited to 0.15 units.

\begin{figure*}
\centering
\fbox{\includegraphics[width=0.23\textwidth]{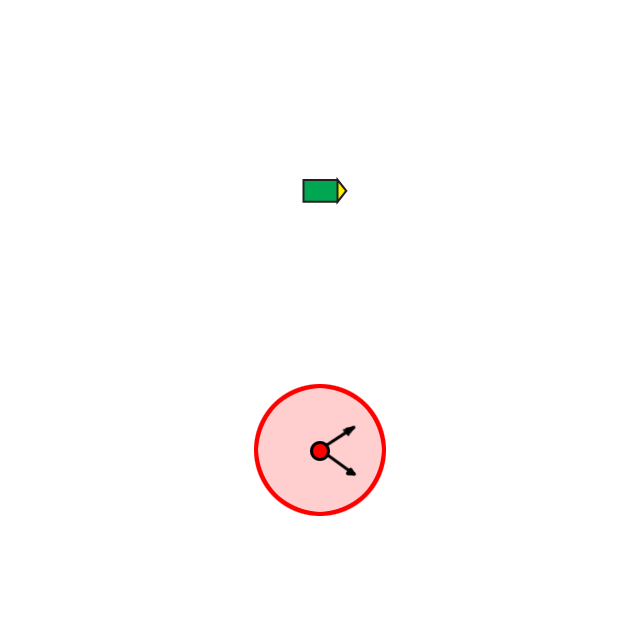}}
\fbox{\includegraphics[width=0.23\textwidth]{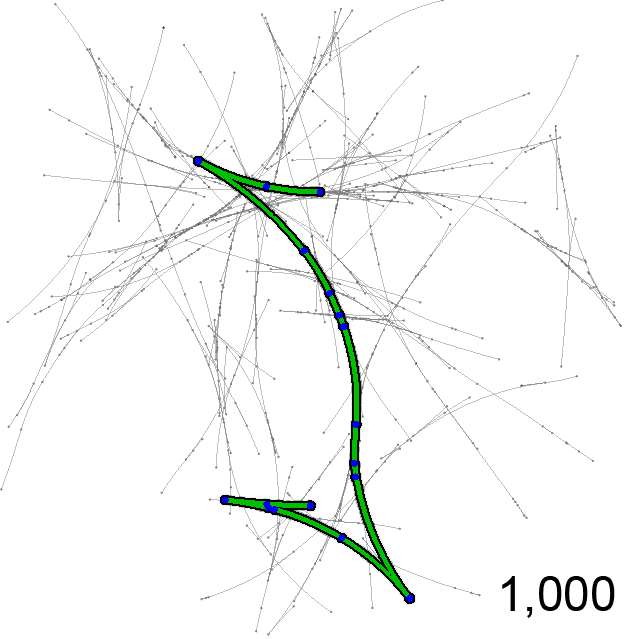}}
\fbox{\includegraphics[width=0.23\textwidth]{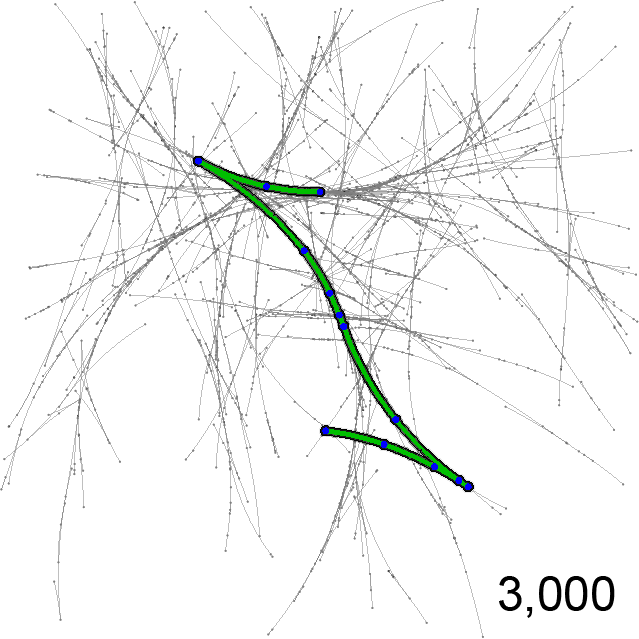}}
\fbox{\includegraphics[width=0.23\textwidth]{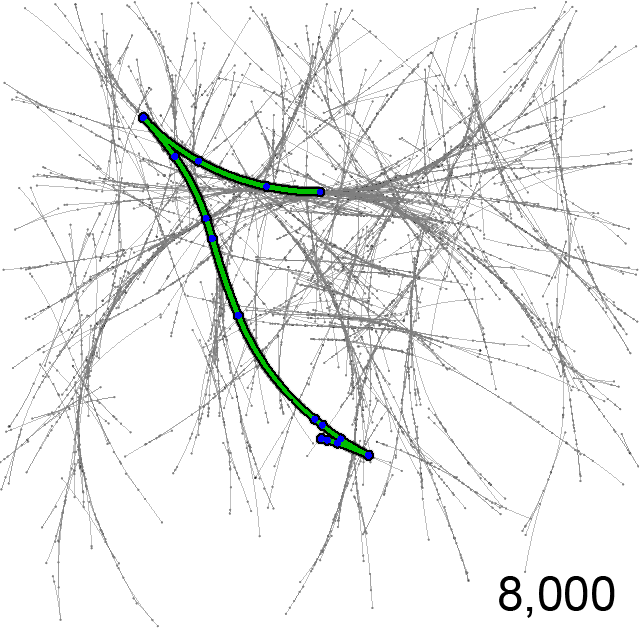}}\\
\vspace{1mm}
\fbox{\includegraphics[width=0.23\textwidth]{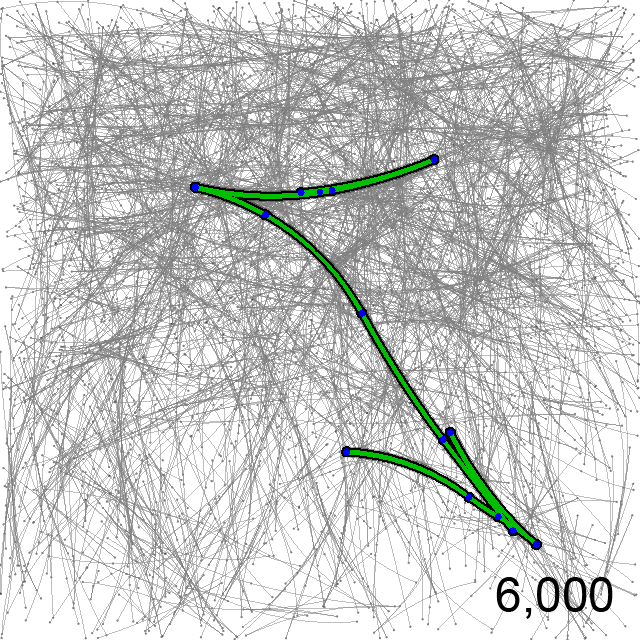}}
\fbox{\includegraphics[width=0.23\textwidth]{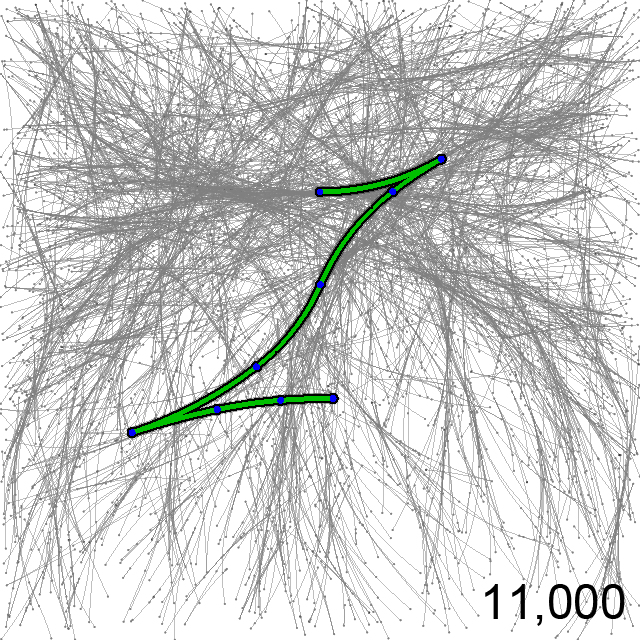}}
\fbox{\includegraphics[width=0.23\textwidth]{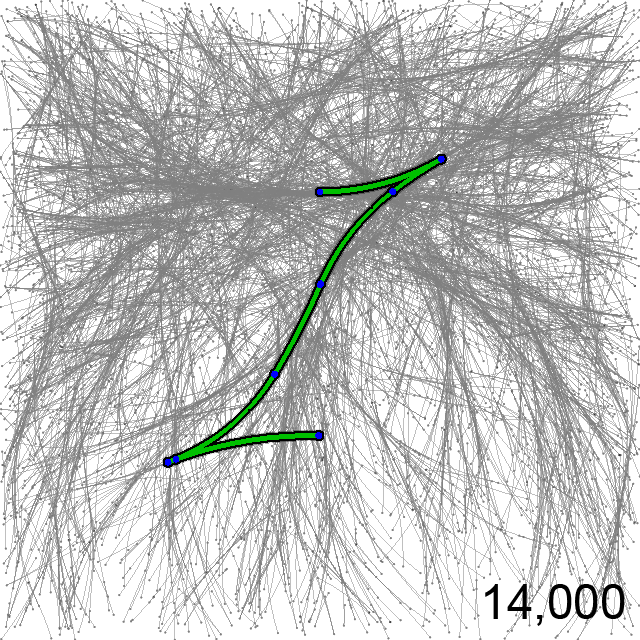}}
\fbox{\includegraphics[width=0.23\textwidth]{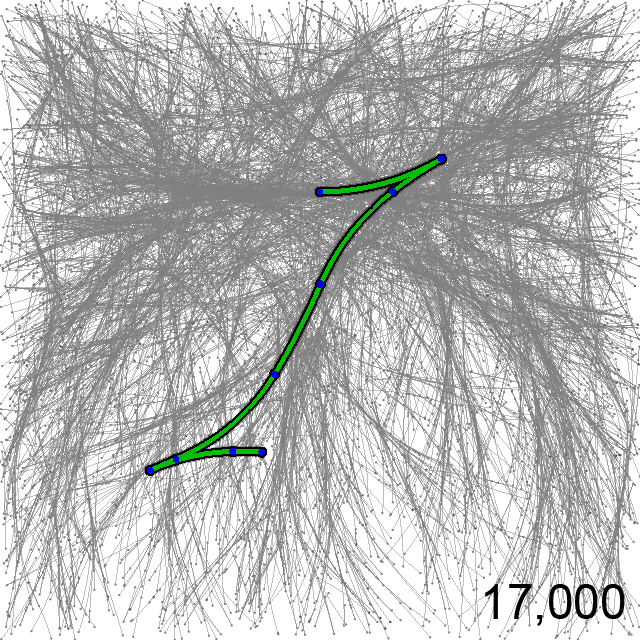}}%
\caption{Planning a sideways maneuver for a Dubins car using AO-RRT (top row) and AO-EST (bottom row). Numbers indicate total number of planning iterations.  Green curve indicates best path found so far. Iteration counts are not directly comparable because RRT spends more time per iteration. }
\label{fig:Dubins}
\end{figure*}

{\bf Dubins.} This problem asks to move a standard Dubins car sideways while keeping orientation relatively fixed (Fig.~\ref{fig:Dubins}). The state is $(x,y,\theta)$ and the control is $(v,\phi)$ where $\theta$ is the heading, $v$ is the forward velocity, and $\phi$ is the steering angle.  State constraints include $(x,y) \in [0,1]^2$, $v \in \{-1,+1\}$, and $\phi \in [-\pi,\pi]$.  For planning, time steps are drawn at random from $[0,0.25]$\,s.  The metric is $d((x,y,\theta),(x^\prime,y^\prime,\theta^\prime)) = \sqrt{(x-x^\prime)^2 + (y-y^\prime)^2 + d_\theta(\theta,\theta^\prime)^2 / (2\pi) }$ where $d_\theta$ measures the absolute angular difference.  The goal is to move the car sideways 0.4 units with a tolerance of 0.1 units in state space, with minimal execution time (equivalent to minimum path length).

{\bf Double Integrator.} This asks to move a point with bounded velocities and accelerations to a target location.  The state space includes $x=(q,v)$ includes configuration $q$ and velocity $v$, with constraints $q \in [0,1]^2$, $v \in [-1,-1]^2$, and $u \in [-5,5]^2$, with $\dot{q} = v$ and $\dot{v} = u$.  The start is 0.06 units from the left and the goal is 0.06 units from the right, which must be reached with a tolerance of 0.2 units in state space.  Distance is euclidean distance.  Time steps are drawn from $[0,0.05]$\, s.

{\bf Pendulum.} The pendulum swing-up problem places a point mass of $m=$1\,kg at the end of a $L=$1\,m massless rod. The state space is $x=(\theta,\omega)$.  The goal is the set of states such that the rod is within $10^\circ$ of inverted and absolute angular velocity less than 0.5\,rad/s, and the cost is the total time required to complete the task.  We take gravitational acceleration to be $g=$9.8\,N$\cdot$s$^2$, and a motor can exert a torque at the fixed end of the rod with bang-bang magnitudes $\tau\in\{-2,0,2\}$\,N$\cdot$m. The dynamics of the system are described by:
\begin{align}
\dot{\theta} &= \omega \\
\dot{\omega} &= \frac{\tau-mg\sin(\theta) L}{mL^2} = -9.8\sin(\theta)+\tau
\end{align}
The difficulty in this task arises from the fact that the exerted torque cannot make the pendulum complete a full rotation. In fact, the torque will be canceled by gravity at about $11.5^\circ$. Therefore, the only way to achieve an inverted position is to take the advantage of gravity by swinging back and forth and accumulating angular momentum. 
For planning, constant torques are applied for a uniformly chosen duration between 0 and 0.5\,s, and trajectories are numerically integrated using a time step of 0.01\,s.  Figure \ref{fig:pendulum-angle-plot} shows the first 5 paths obtained by AO-RRT. 
\begin{figure}
\centering
\includegraphics[width=0.8\linewidth]{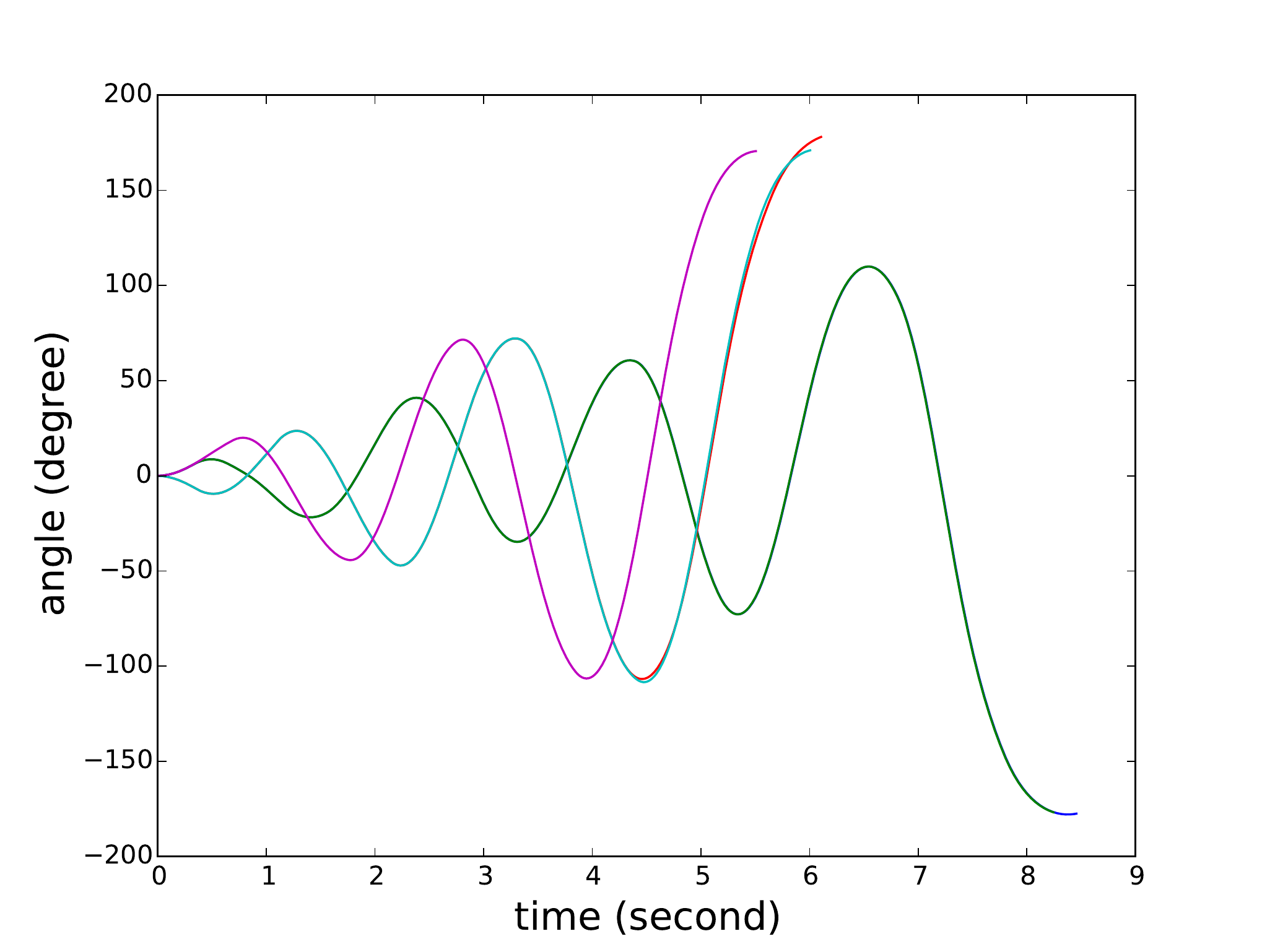}
\caption{Plot of angle vs. time for the first five trajectories found by AO-RRT on the pendulum example. Shorter execution times (rightmost point on each curve) are preferred.  The execution time decreases from 8.46 seconds in the first solution to 5.51 seconds in the fifth solution. (Best viewed in color)}
\label{fig:pendulum-angle-plot}
\end{figure}

{\bf Flappy.} We devised a simplified version of the once-popular game Flappy Bird.  The ``bird'' has a constant horizontal velocity, and can choose to fall freely under gravity, or apply a sharp upward thrust. The trajectory is a piecewise-parabolic curve.  In the original game the objective is simply to avoid obstacles as long as possible, but in our case we consider other cost functions.  The goal is to traverse from the left of the screen to a goal region on the right. The screen domain is $1000\times 600$ pixels with fixed horizontal velocity of $v_x=5px/s$. The gravitational acceleration is $g=1px/s^2$ downward. The control $u$ is binary, and provides an upward thrust of either $0$ or $4px/s^2$. Fig.~\ref{flappy-demo} shows an example solution path obtained by our planner.
\begin{figure}
\begin{center}
\includegraphics[width=0.85\linewidth]{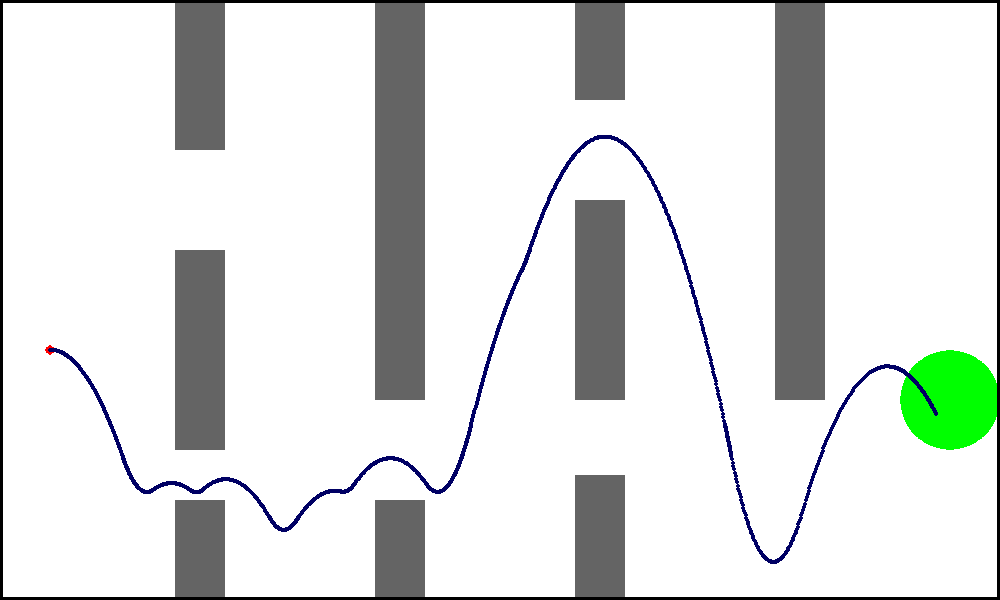}
\caption{Example solution path for Flappy. The planner finds a path that goes from the starting point on the left to the green goal region on the right while avoiding obstacles represented by gray rectangles. }
\label{flappy-demo}
\end{center}
\end{figure}
We represent the state by
\begin{align}
\vec{x}=\begin{bmatrix}
x\\
y\\
v_y
\end{bmatrix}
\end{align}
where $(x,y)$ is the bird position and $v_y$ is vertical velocity. The state evolves according to $\dot{x}=5$, $\dot{y}=v_y$, and $\dot{v_y} = -1+4u$ where $u\in\{0,1\}$ is the binary control.  Time steps are sampled uniformly from the range $[0,1]$, and the time evolution of the state is solved for analytically.  The experiments below  illustrate the ability of AO-$x$ to accept unusual cost functions.

\subsection{Experiments}

{\bf Comparing AO-EST and AO-RRT.} Fig.~\ref{fig:Dubins} illustrates AO-EST and AO-RRT applied to the Car example.  Qualitatively, RRTs tend to explore more widely at the beginning of planning, while ESTs tend to focus more densely on regions already explored.  As a result, in this example, AO-RRT finds a first path quicker, while AO-EST converges more quickly to the optimum (each iteration of EST is cheaper).  Like in feasible planning, the best planner is largely problem-dependent, and we could find no clear winner on our other experiments.  

\begin{figure*}
\centering
\includegraphics[width=0.98\textwidth]{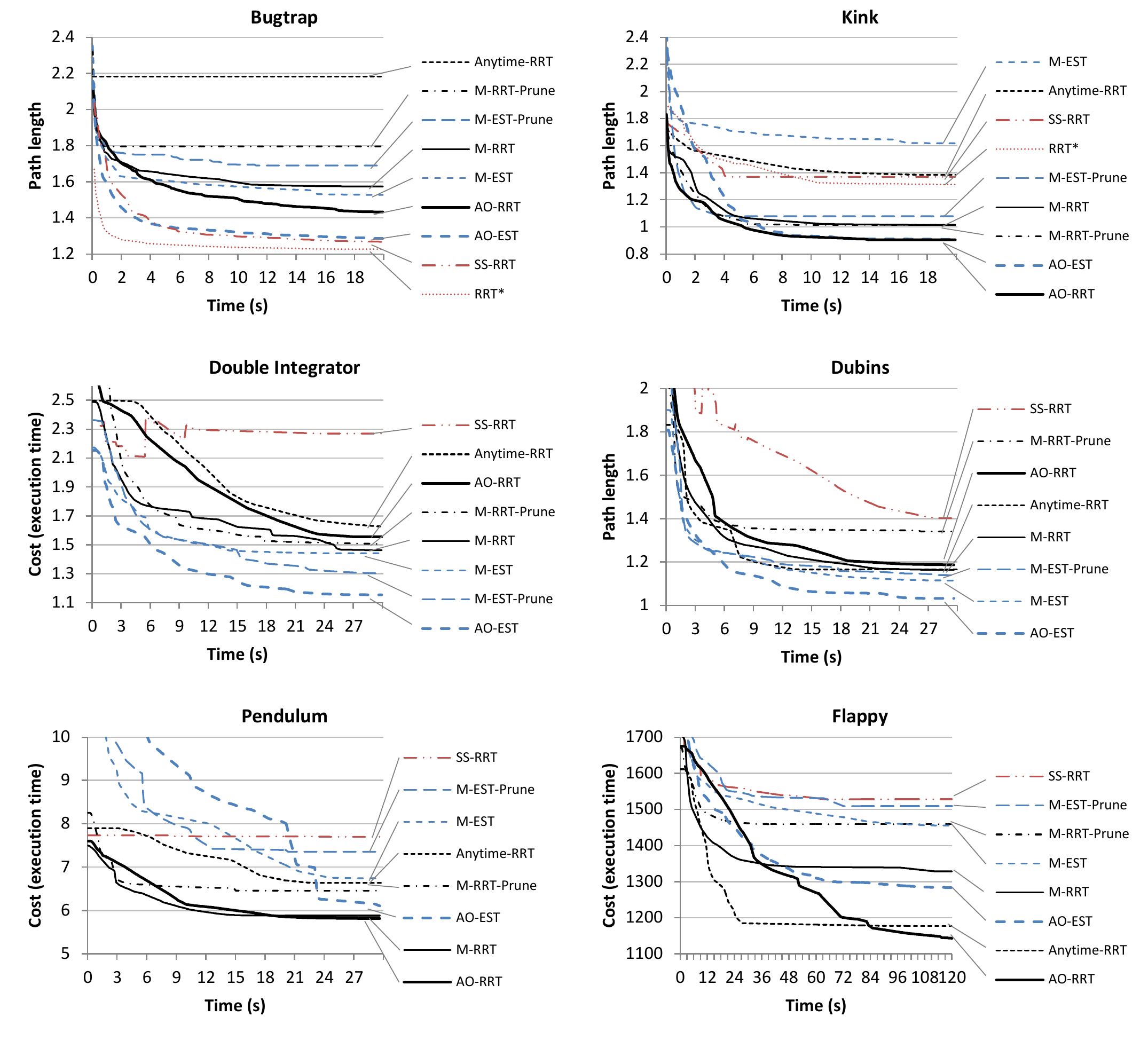}
\caption{Results of benchmark tests.  Curves measure solution cost vs computation time, averaged over 10 runs. (Lower is better)}
\end{figure*}

{\bf Benchmarking against comparable planners.}  We compare against the simpler meta-planner M-$x$ which simply runs the feasible planner $x$ multiple times, keeping the lowest-cost path found so far.  We also experimented with a variant, M-$x$-Prune, which prunes search nodes whose cost is greater than the cost of the best path found so far.  In the 2D problems, we compare against RRT*~\cite{KF2010}, and for fair comparison we provide the other RRT-based planners with the straight-line a steering function as well.  We also compare against Anytime-RRT~\cite{Ferguson2006anytime} and Stable-Sparse-RRT (SS-RRT)~\cite{LLB2014}.  We also compared SST*~\cite{LLB2014}, but it performed worse than SS-RRT in all of our tests. 

For fair comparison, all algorithms were implemented in Python using the same subroutines for feasibility checking, visibility checking, and distance metrics.  All planners used the same parameters as AO-$x$ where applicable.  For Anytime-RRT we used $\epsilon=0.01$ and $\delta_c = 0.1$, and found performance was relatively insensitive to these parameters.  For SS-RRT, we used parameters $\delta_{BN} = 0.1$ and $\delta_s = 0.03$.  Tuning of these parameters did not seem to have a consistent effect on performance.  KD-trees were used for closest node selection in all of the RRT-based algorithms except Anytime-RRT, in which brute-force selection must be used because it does not select nodes using a true distance metric. 

Fig.~\ref{Benchmarks} displays computation time vs solution cost, averaged over 10 runs for all of the benchmark problems.  These results suggest that AO-EST consistently outperforms M-EST and M-EST-Prune, while AO-RRT sometimes outperforms M-RRT and M-RRT-Prune, but sometimes performs roughly the same.  We find that Anytime-RRT and SS-RRT typically do not perform even as well as the simpler $M-RRT$ algorithm, although Anytime-RRT did perform well on Flappy, and SS-RRT did perform well on Bugtrap.  Surprisingly, RRT*  performed excellently on Bugtrap but poorly on Kink despite the fact that it uses rewiring via a steering function. This drop in performance is explained by the fact that it spends excessive amounts of time building a detailed roadmap of the open homotopy class, rather than exploring the narrow passage.

Overall, we observe that AO-EST is best or near-best performer in most problems.  AO-RRT sometimes is the best performer, but is more inconsistent.  A possible explanation is the well-known {\em metric sensitivity} of RRTs: when the distance metric becomes a poor approximation to cost-to-go, then RRT performance deteriorates.  This property is inherited by AO-RRT.

{\bf RRT distance metric.}  We empirically studied the influence of distance metric on planning time and quality for AO-RRT.  
For the pendulum example, we use a weighted Euclidean metric
\begin{align}
d(x_1,x_2)=\sqrt{d_\theta(\theta_1,\theta_2)^2+(\omega_1-\omega_2)^2+w_c(c_1-c_2)^2}
\end{align}
where $w_c$ trades off between the state-space distance and the cost-space distance. For each value of $w_c =$ 0.1, 0.3, 1, 3, and 10, we ran AO-RRT 10 times using a 60\,s time limit.  Fig.~\ref{fig:cost-weight-pendulum} shows that for this example, higher cost weights have a minor effect on solution cost but a detrimental effect on running time per iteration.

\begin{figure}
\centering
\includegraphics[width=0.47\linewidth]{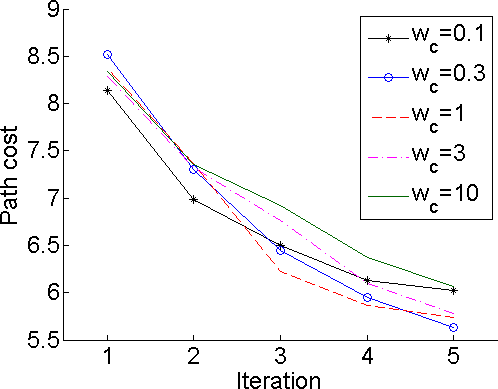}
\includegraphics[width=0.47\linewidth]{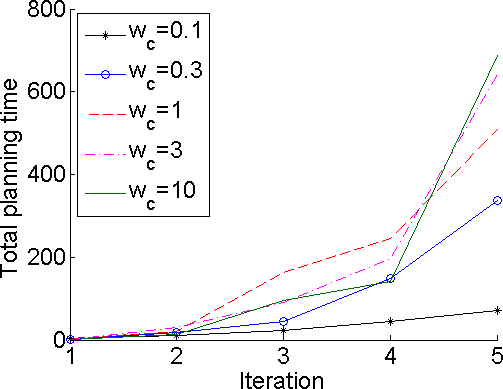}
\caption{The influence of cost weight in the RRT distance metric for the Pendulum example showing average cost (left) and planning time (right) over successive iterations.}
\label{fig:cost-weight-pendulum}
\end{figure}

\begin{figure}
\centering
\includegraphics[width=0.7\linewidth]{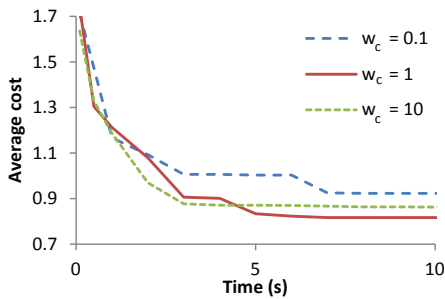}
\caption{Experiments comparing the cost weight on a planar kinematic problem suggests that estimating the true cost leads to faster convergence.}
\label{fig:cost-weight-kinematic}
\end{figure}

We found a very different effect on a second problem.  This one is a kinematically-constrained, planar minimum path length problem with obstacles.  The ``ideal'' cost weight is 1, since it perfectly measures the cost-to-go.  Experiments in Fig.~\ref{fig:cost-weight-kinematic} justify this choice, showing that it converges quicker toward the optimum.

{\bf Adaptation to different costs.} Using the Flappy problem, we demonstrate the fast adaptability of the AO method to different cost functions, even those that are non-differentiable.  AO-RRT is used here. First, we set cost equal to path length.  The second cost metric penalizes the distance traveled {\em only in the lower half} of the screen. The optimal path prefers high altitudes and passes through the two upper openings and one lower opening.  Fig.~\ref{fig:flappy-costs}, shows the results.
\begin{figure}
\centering
\includegraphics[width=0.85\linewidth]{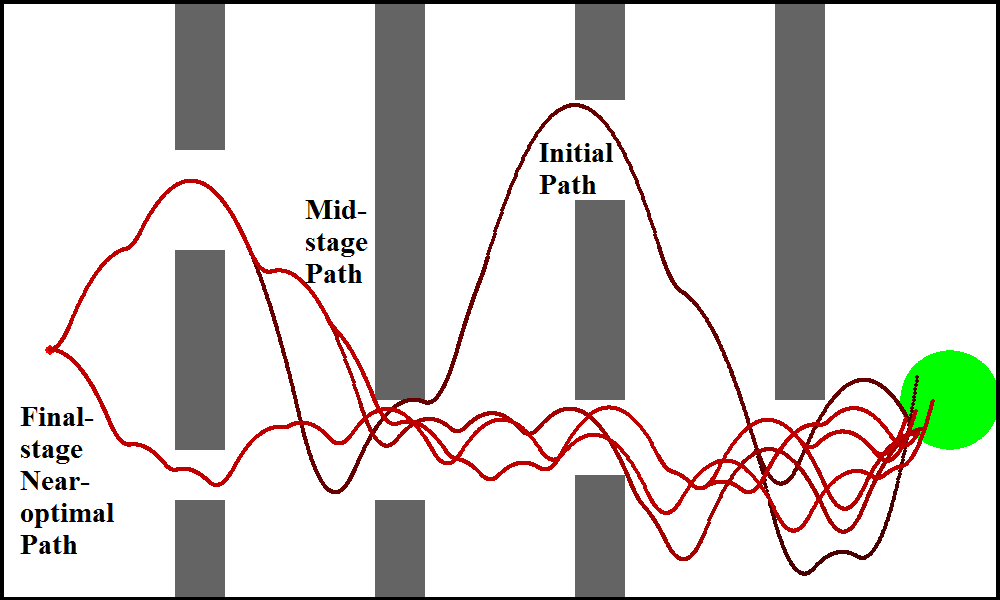}\\
\vspace{1mm}
\includegraphics[width=0.85\linewidth]{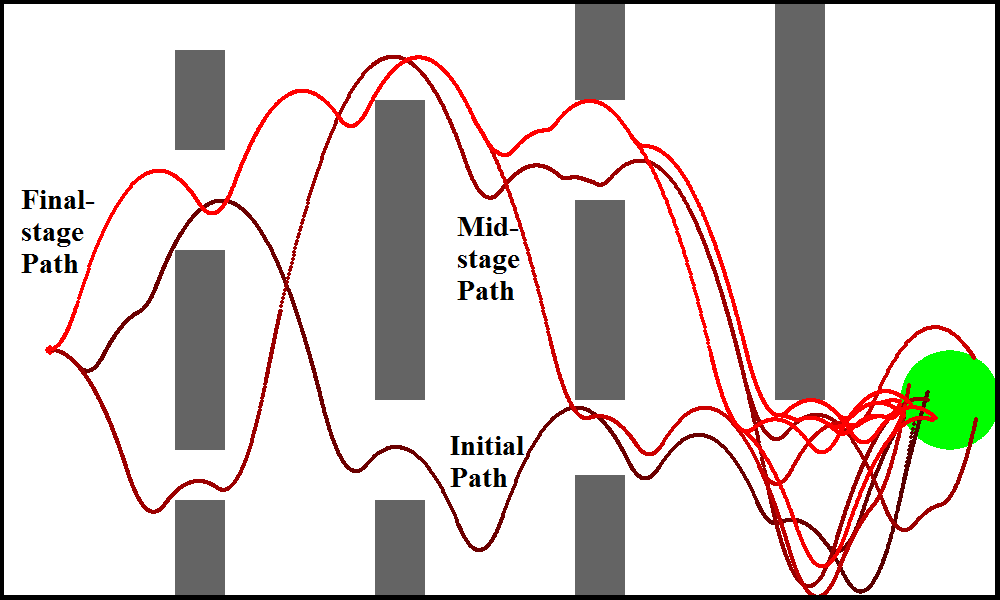}%
\caption{Convergence of Flappy with two different cost metrics.  Brighter paths are of lower cost.  Top: Penalizing path length. The first path is high cost (1866\,px), passing through both upper openings, and eventually converges to a path that passes both lower openings (1234\,px). Below: Penalizing low altitude paths. The first path passes through most of the lower openings (cost 1330), and the planner converges to a path that passes through upper openings (cost 321). }
\label{fig:flappy-costs}
\end{figure}

\section{Conclusion}

This paper presents an equivalence between optimal motion planning problems (either kinodynamic or kinematic) and feasible kinodynamic motion planning problems using a state-cost space transformation.  Despite the simplicity of the transformation, it is a powerful tool; we use it to develop an easily implemented, asymptotically-optimal, sampling-based meta-planner that accepts a sampling-based kinodynamic feasible planner as input.  It purely uses control-based sampling, making it suitable for problems with general differential constraints and cost functions that do not admit a steering function.  The expected convergence rate of the meta-planner is proven to be related to the goal-dependent running time of the underlying feasible planner.  Using RRT and EST as feasible planning subroutines, we demonstrate that the proposed method attains state-of-the-art performance on a number of benchmarks.

We hope this new formulation will provide inspiration and theoretical justification for new approaches to optimal motion planning.  As an example, an obvious way to improve convergence rate would be to run local optimizations on each trajectory found by the underlying planner; this method has been shown to work well for kinematic optimal path planning~\cite{anytime2013}.  We also obtained curious results regarding state-space vs cost-space weighting in the RRT distance metric.  Following up on this work may also open up avenues of research in sampling strategies for state-cost space planning, e.g., in appropriate biasing strategies.



\bibliographystyle{plainnat}
\bibliography{refs}

\end{document}